\newcommand{\G}{\mathcal{G}}
\newcommand{\hg}{\mathcal{H}}
\newcommand{\V}{\mathcal{V}}
\newcommand{\E}{\mathcal{E}}
\newcommand{\set}{\mathcal{S}}
\newcommand{\R}{\mathbb{R}}
\newcommand{\x}{\mathbf{x}}
\newcommand{\vol}{\mathrm{vol}}
\newcommand{\cut}{\mathrm{cut}}
\newtheorem{prop}{Proposition}
\newtheorem{lemma}{Lemma}
\theoremstyle{remark}
\newtheorem*{remark}{Remark}
\title{Hypergraphs with Edge-Dependent Vertex Weights: Spectral Clustering based on the $1$-Laplacian}
\name{Yu Zhu, Boning Li, and Santiago Segarra
\thanks{This work was supported by NSF under award CCF-2008555. 
E-mails: \{yz126, boning.li, segarra\}@rice.edu      }}
\address{Dept. of Electrical and Computer Engineering, Rice University, USA}
\begin{document}
\maketitle

\begin{abstract}
We propose a flexible framework for defining the $1$-Laplacian of a hypergraph that incorporates edge-dependent vertex weights. 
These weights are able to reflect varying importance of vertices within a hyperedge, thus conferring the hypergraph model higher expressivity than homogeneous hypergraphs. 
We then utilize the eigenvector associated with the second smallest eigenvalue of the hypergraph $1$-Laplacian to cluster the vertices. 
From a theoretical standpoint based on an adequately defined normalized Cheeger cut, this procedure is expected to achieve higher clustering accuracy than that based on the traditional Laplacian.
Indeed, we confirm that this is the case using real-world datasets to demonstrate the effectiveness of the proposed spectral clustering approach. 
Moreover, we show that for a special case within our framework, the corresponding hypergraph $1$-Laplacian is equivalent to the $1$-Laplacian of a related graph, whose eigenvectors can be computed more efficiently, facilitating the adoption on larger datasets.
\end{abstract}

\begin{keywords}
Hypergraphs, Laplacian, spectral clustering, edge-dependent vertex weights, submodular hypergraphs
\end{keywords}

\section{Introduction}\label{s:intro}

Clustering is the task of dividing a set of entities into several groups such that entities in the same group are more similar to each other than to those in other groups.
In graph clustering or partitioning, the vertices of a graph correspond to the entities and the edge weights encode pairwise similarities that can also be understood as the cost of assigning the endpoints of an edge to different clusters.
In this way, graph clustering can be cast as an optimization problem that aims to minimize the total cost of cutting the edges across clusters~\cite{von2007tutorial, buhler2009spectral}.

Although graphs are widely used in a myriad of machine learning tasks, they are limited to representing \emph{pairwise} interactions.
By contrast, in many real-world applications the entities engage in \emph{higher-order} relations~\cite{battiston2020networks, schaub2021signal,roddenberry2021principled}.
Examples of these multi-way relations include authors collaborating together to generate a manuscript~\cite{chitra2019random}, 
multiple customers who bought the same product~\cite{li2018tail},
or even multiple news articles containing the same key words~\cite{hayashi2020hypergraph, zhu2021co}.
Such relations can be modeled by hypergraphs, where the notion of an edge is generalized to a hyperedge that can connect more than two vertices~\cite{battiston2020networks, schaub2021signal}.
Classical hypergraph partitioning approaches mimic graph partitioning and consider homogeneous hyperedge cuts --- an identical cost is charged if a hyperedge is cut no matter \emph{how} it is cut~\cite{hein2013total}.

To leverage the fact that different subsets of vertices in a hyperedge may have different structural importance, the concept of an inhomogeneous hyperedge cut has been proposed~\cite{li2017inhomogoenous}. 
In this case, each hyperedge $e$ is associated with a function $w_e:2^e\to\R_{\geq 0}$ that assigns costs to every possible cut of the hyperedge where $2^e$ denotes the power set of $e$.
The weight $w_e(\set)$ indicates the cost of partitioning $e$ into the two subsets $\set$ and $e\setminus\set$.
In particular, when $w_e$ is a submodular function, the corresponding model is called a submodular hypergraph which has desirable mathematical properties, making it convenient for theoretical analysis. 
A series of results in graph spectral theory~\cite{chung92spectral} including $p$-Laplacians and Cheeger inequalities have been generalized to submodular hypergraphs~\cite{li2018submodular, yoshida2019cheeger}.

Naturally, the choice of the weight function $w_e$ has a large practical effect on the clustering of a hypergraph and its subsequent downstream tasks.
Most existing works consider cardinality-based functions~\cite{zhou2006learning,agarwal2006higher,veldt2020hypergraph} in which the value of $w_e(\set)$ depends only on the cardinalities of $\set$ and $e\setminus\set$ such as $w_e(\set)=|\set|\cdot|e\setminus\set|$.
However, cardinality-based functions are limited in the sense that they cannot incorporate information regarding the different contribution of vertices to a hyperedge.
To capture such information, we leverage edge-dependent vertex weights (EDVWs): Every vertex $v$ is associated with a weight $\gamma_e(v)$ for each incident hyperedge $e$ that reflects the contribution of $v$ to $e$~\cite{chitra2019random}.
The hypergraph model with EDVWs is highly relevant in practice.
Going back to our earlier examples, EDVWs can be used to model the author positions in a co-authored manuscript~\cite{chitra2019random}, the quantity of a product bought by a user in an e-commerce system~\cite{li2018tail}, or the relevance of a word to a document in text mining~\cite{hayashi2020hypergraph, zhu2021co}.
In this context, we provide a principled way of defining weight functions $w_e$ based on the EDVWs that guarantee submodularity.
Lastly, even when given $w_e$, finding the optimal (balanced) graph cut is NP-hard~\cite{wagner1993between}.
A common workaround is to rely on spectral relaxations based on the combinatorial ($2$-)Laplacian~\cite{von2007tutorial}.
Here we outperform this standard approach by leveraging the fact that the second eigenvalue of the $1$-Laplacian is equal to the Cheeger constant~\cite{li2018submodular, chang2016spectrum, chang2017nodal, tudisco2018nodal}.

\vspace{2mm}
\noindent
\textbf{Contribution.}
The contributions of this paper can be summarized as follows:\\
1) We provide a framework to define hyperedge weight functions $w_e$ based on EDVWs that guarantee submodularity.\\
2) We consider a particular choice of $w_e$ in the proposed framework and show that its corresponding hypergraph $1$-Laplacian reduces to the $1$-Laplacian of a graph obtained via clique expansion.\\
3) We leverage the eigenvector associated with the second smallest eigenvalue of the hypergraph $1$-Laplacian to cluster the vertices and validate the effectiveness of this method using real-world datasets.

\section{Preliminaries}\label{s:pre}

We briefly review some basic mathematical concepts in Section~\ref{ss:math} and then introduce the submodular hypergraph model and its corresponding $1$-Laplacian in Section~\ref{ss:submodular_hg}. 
Throughout the paper we assume that the hypergraph is connected.

\subsection{Mathematical preliminaries}\label{ss:math}

For a finite set $\set$, a set function $F:2^\set\to\R$ is called submodular if $F(\set_1\cup\{u\}) - F(\set_1) \geq F(\set_2\cup\{u\}) - F(\set_2)$ for every $\set_1\subseteq\set_2\subset\set$ and every $u\in\set\setminus\set_2$.
Considering a set function $F:2^\set\to\R$ where $\set=[N]=\{1,2,\cdots,N\}$, its Lov\'asz extension $f:\R^N\to\R$ is defined as follows.
For any $\x\in\R^N$, sort its entries in non-increasing order $x_{i_1} \geq x_{i_2} \geq \cdots \geq x_{i_N}$ and set
\begin{equation}\label{e:lovasz}
f(\x) = \textstyle\sum_{j=1}^{N-1} F(\set_j) ( x_{i_{j}} - x_{i_{j+1}} ) + F(\set)x_{i_N}
\end{equation}
where $\set_j=\{i_1,\cdots,i_j\}$ for $1\leq j<N$.
The Lov\'asz extension $f$ is convex if and only if $F$ is submodular~\cite{lovasz1983submodular, bach2013learning}.

\subsection{Submodular hypergraphs}\label{ss:submodular_hg}

Let $\hg=(\V,\E,\mu,\{w_e\})$ denote a submodular hypergraph~\cite{li2018submodular} where $\V=[N]$ is the vertex set and $\E$ represents the set of hyperedges.
The function $\mu:\V\to\R_{+}$ assigns positive weights to every vertex.
Each hyperedge $e\in\E$ is associated with a submodular function $w_e:2^e\to\R_{\geq 0}$ that assigns non-negative costs to each possible partition of the hyperedge $e$.
Moreover, $w_e$ is required to satisfy $w_e(\emptyset)=0$ and be symmetric so that $w_e(\set)=w_e(e\setminus \set)$ for any $\set\subseteq e$.
The domain of the function $w_e$ can be extended from $2^e$ to $2^{\V}$ by setting $w_e(\set)=w_e(\set\cap e)$ for any $\set\subseteq\V$, guaranteeing that the submodularity is maintained.
Note that if $w_e$ is a constant function, submodular hypergraphs reduce to homogeneous hypergraphs, where the cost of cutting a hyperedge is independent of the specific partition.

A cut is a partition of the vertex set $\V$ into two disjoint, non-empty subsets denoted by $\set$ and its complement $\bar{\set}=\V\setminus\set$.
The weight of the cut is defined as the sum of cut costs associated with each hyperedge~\cite{li2018submodular}, i.e., $\cut(\set,\bar{\set})=\sum_{e\in\E} w_e(\set)$.
The Cheeger constant~\cite{chung92spectral, li2018submodular, tudisco2018nodal} is defined as
\begin{equation}\label{e:2_cheeger_constant}
h_2 = \min_{\emptyset\subset\set\subset\V} \frac{\cut(\set,\bar{\set})}{\min\{\vol(\set), \vol(\bar{\set})\}},
\end{equation}
where $\vol(\set)=\sum_{v\in\set}\mu(v)$ denotes the volume of $\set$.
The solution to~\eqref{e:2_cheeger_constant} provides an optimal partitioning in the sense that we obtain two balanced clusters (in terms of their volume) that are only loosely connected, as captured by a small cut weight.
Although many alternative clustering formulations exist~\cite{von2007tutorial, buhler2009spectral, carlsson2013axiomatic, carlsson2018hierarchical}, in this paper we adopt the minimization of~\eqref{e:2_cheeger_constant} as our objective.

Optimally solving~\eqref{e:2_cheeger_constant} has been shown to be NP-hard for graphs~\cite{wagner1993between, buhler2009spectral, szlam2010total}, let alone weighted hypergraphs.
Hence, different relaxations have been proposed, a popular one being spectral clustering, a relaxation based on the second eigenvector of the graph (or hypergraph) Laplacian~\cite{von2007tutorial, buhler2009spectral, hayashi2020hypergraph}.
This approach is also theoretically justified through the Cheeger inequality~\cite{chung92spectral, tudisco2018nodal, li2018submodular, chitra2019random}, where $h_2$ is \emph{upper bounded} by a function of the second smallest eigenvalue of the Laplacian.
However, it has been proved that the Cheeger constant $h_2$ is \emph{equal} to the second smallest eigenvalue $\lambda_2$ of the hypergraph $1$-Laplacian, an alternative (non-linear) operator that generalizes the classical Laplacian (cf. Theorem 4.1 in~\cite{li2018submodular}).
Moreover, the corresponding partitioning can be obtained by thresholding the eigenvector associated with $\lambda_2$ (cf. Theorem 4.3 in~\cite{li2018submodular}).
The $1$-Laplacian $\triangle_1$ of a submodular hypergraph is defined as an operator that, for all $\x\in\R^N$, induces
\begin{equation}\label{e:Q1}
\langle \x, \triangle_1(\x) \rangle = \textstyle\sum_{e\in\E} f_e(\x),
\end{equation}
where $f_e$ is the Lov\'asz extension of $w_e$.
Notice that $\triangle_1$ can be alternatively defined in terms of the subdifferential of $f_e$~\cite{li2018submodular}, but the inner product definition in~\eqref{e:Q1} is more instrumental to our development.
The eigenvector of $\triangle_1$ associated with $\lambda_2$ can be obtained by minimizing  
\begin{equation}\label{e:R1}
R_1(\x) = \frac{\langle \x, \triangle_1(\x) \rangle }{\min_{c\in\R} \|\x-c\mathbf{1}\|_{1,\mu}},
\end{equation}
where $\|\x\|_{1,\mu}=\sum_{v\in\V} \mu(v) |x_v|$.
Since finding a vector $\x$ that minimizes $R_1(\x)$ is equivalent to solving~\eqref{e:2_cheeger_constant}, this minimization is also NP-hard, stemming from the non-linear nature of $\triangle_1$.
However, developing approximate solutions through this route has shown advantages for submodular hypergraphs~\cite{li2018submodular}.
In this paper, we show that this is also the case for the more expressive model of hypergraphs with EDVWs.

\begin{remark}[Graph $1$-Laplacian]
An undirected graph $\G=(\V,\E,\mathbf{A})$ with adjacency matrix $\mathbf{A}$ can be regarded as a special case of submodular hypergraphs where $w_e(\set)$ equals the edge weight if only one endpoint of the edge is contained in $\set$ and equals zero otherwise.
Under this setting, the Lov\'asz extension $f_e$ of $w_e$ for edge $e=(u,v)$ turns out to be $f_e(\x)=A_{uv}|x_u-x_v|$ according to~\eqref{e:lovasz}.
Following~\eqref{e:Q1}, we thus obtain that the \emph{graph} 1-Laplacian, which we denote by $\triangle^{(g)}_1$ for convenience, is given by
\begin{equation}\label{e:Q1g}
\langle \x, \triangle^{(g)}_1(\x)\rangle = \textstyle\frac{1}{2}\sum_{u,v\in\V} A_{uv}|x_u-x_v|.
\end{equation}
This definition is consistent with prior definitions of the $1$-Laplacian in the graph domain~\cite{szlam2010total, hein2010inverse, chang2016spectrum, chang2017nodal}.
\end{remark}

\section{Spectral clustering of hypergraphs with EDVWs based on $1$-Laplacian}\label{s:alg}

Let $\hg=(\V,\E,\mu,\kappa,\{\gamma_e\})$ be a hypergraph with EDVWs~\cite{chitra2019random} where $\V=[N]$, $\E$, and $\mu$ respectively denote the vertex set, the hyperedge set, and positive vertex weights.
The function $\kappa:\E\to\R_{+}$ assigns positive weights to hyperedges, and those weights can reflect the strength of connection. Each hyperedge $e\in\E$ is associated with a function $\gamma_e:\V\to\R_{\geq 0}$ to assign \emph{edge-dependent vertex weights}.
For $v\in e$, $\gamma_e(v)$ is positive and measures the importance of the vertex $v$ to the hyperedge $e$; for $v\notin e$, we set $\gamma_e(v)=0$.

The introduction of EDVWs enables the hypergraph to model the cases where the vertices in one hyperedge contribute differently to this hyperedge.
For example, in a co-authorship network where authors and papers are respectively modeled as vertices and hyperedges, every author generally has a different degree of contribution to a paper, usually reflected by the order of the authors.
This information is lost in homogeneous hypergraphs and is hard to be directly described by submodular hypergraphs, but it can be conveniently represented via EDVWs.
Thus, deriving a principled framework to treat EDVWs while still leveraging the benefits of submodular hypergraphs (see Section~\ref{ss:submodular_hg}) is of paramount practical importance.

\subsection{Building submodular hypergraphs from EDVWs}\label{ss:prop1}

We propose a method to define a class of submodular weight functions $w_e$ based on EDVWs.
Putting it differently, we propose a way to convert the hypergraph model with EDVWs into a submodular hypergraph by providing a recipe that generates $\{w_e\}$ from $\kappa$ and $\{\gamma_e\}$. 
We do this by exploiting the following result.

\begin{prop}\label{prop1}
Define a weight function $w_e:2^\V\to\R_{\geq 0}$ associated with hyperedge $e$ as 
\begin{equation}\label{e:w_e_prop}
	w_e(\set) = h_e(\kappa(e)) \cdot g_e(\textstyle\sum_{v\in\set}\gamma_e(v)),
\end{equation}
for all $\set\subseteq\V$.
If $h_e:\R_{+}\to\R_{+}$ is an arbitrary function and 
%$g_e:\R_{\geq 0}\to\R_{\geq 0}$ is concave then $w_e$ is a submodular function.
$g_e:[0, \sum_{v\in e} \gamma_e(v)]\to\R_{\geq 0}$ is concave, symmetric with respect to $\sum_{v\in e} \gamma_e(v)/2$ and satisfies $g_e(0)=0$, then $w_e$ is a symmetric submodular function satisfying $w_e(\emptyset)=0$.
\end{prop}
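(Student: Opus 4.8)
The plan is to verify, in turn, the three defining properties of a symmetric submodular weight function, reserving the bulk of the effort for submodularity. Throughout I would write $c = h_e(\kappa(e))$, which is a strictly positive constant since $h_e$ maps into $\R_+$, and abbreviate $s(\set) = \sum_{v\in\set}\gamma_e(v)$ together with $\Gamma_e = \sum_{v\in e}\gamma_e(v)$. Because $\gamma_e(v)=0$ for $v\notin e$, we have $s(\set)=s(\set\cap e)$, so $w_e$ depends on $\set$ only through $\set\cap e$; this matches the domain-extension convention of Section~\ref{ss:submodular_hg}, so it suffices to establish submodularity on $2^e$ and then invoke the stated fact that the restriction $w_e(\set)=w_e(\set\cap e)$ preserves it. Nonnegativity of $w_e$ is immediate, since $c>0$ and $g_e\geq 0$.

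First I would dispatch the two boundary-type properties. For the vanishing condition, $s(\emptyset)=0$, so $w_e(\emptyset)=c\,g_e(0)=0$ by the assumption $g_e(0)=0$. For symmetry, I note that for $\set\subseteq e$ we have $s(e\setminus\set)=\Gamma_e-s(\set)$; since $g_e$ is symmetric about $\Gamma_e/2$, which is equivalent to $g_e(x)=g_e(\Gamma_e-x)$ on its domain, it follows that $w_e(e\setminus\set)=c\,g_e(\Gamma_e-s(\set))=c\,g_e(s(\set))=w_e(\set)$.

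The main step is submodularity. I would take $\set_1\subseteq\set_2\subseteq e$ and $u\in e\setminus\set_2$, and set $s_1=s(\set_1)$, $s_2=s(\set_2)$, and $\delta=\gamma_e(u)\geq 0$. Since adjoining $u$ raises the statistic $s(\cdot)$ by exactly $\delta$, dividing the submodular inequality through by $c>0$ shows it is equivalent to the scalar statement
\[
g_e(s_1+\delta)-g_e(s_1)\ \geq\ g_e(s_2+\delta)-g_e(s_2),\qquad s_1\leq s_2.
\]
All four arguments lie in $[0,\Gamma_e]$ because $\set_2\cup\{u\}\subseteq e$, so $g_e$ is only evaluated within its domain. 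This inequality is precisely the decreasing-differences (equivalently, decreasing secant-slope) property of a concave function: for $\delta=0$ both sides vanish, whereas for $\delta>0$ concavity of $g_e$ gives $\tfrac{1}{\delta}\,[g_e(s_1+\delta)-g_e(s_1)]\geq \tfrac{1}{\delta}\,[g_e(s_2+\delta)-g_e(s_2)]$, and multiplying by $\delta$ yields the claim.

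I anticipate that the only substantive content lies in the submodularity step, and even there the difficulty is conceptual rather than computational: one must recognize that set-function submodularity of $w_e$ collapses, via the additive (modular) statistic $s(\cdot)$, to the one-dimensional concavity of $g_e$. Once that reduction is in place, the inequality is the textbook chord-slope monotonicity of concave functions and needs no further work. The symmetry and boundary conditions are routine, and I would not expect any obstacle beyond bookkeeping to ensure $g_e$ is never evaluated outside $[0,\Gamma_e]$.
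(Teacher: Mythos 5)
Your proposal is correct and follows essentially the same route as the paper's proof: reduce submodularity of $w_e$ to the scalar decreasing-differences inequality $g_e(s_1+\delta)-g_e(s_1)\geq g_e(s_2+\delta)-g_e(s_2)$ for the concave $g_e$, then verify symmetry and $w_e(\emptyset)=0$ directly from the hypotheses on $g_e$. The only difference is that the paper derives this decreasing-differences inequality from the definition of concavity in a separate lemma (its Lemma~\ref{concave_func_property}), whereas you invoke it as a textbook chord-slope fact, which is a legitimate shortcut.
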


\begin{proof}
For every $\set_1\subseteq\set_2\subset\V$ and every $u\in\V\setminus\set_2$, set $a_1=\sum_{v\in\set_1}\gamma_e(v)$, $a_2=\sum_{v\in\set_2}\gamma_e(v)$ and $b=\gamma_e(u)$.
Since $g_e$ is concave, it follows from Lemma~\ref{concave_func_property} (see below) that 
\begin{align*}
g_e(\textstyle\sum_{v\in\set_1}\gamma_e(v) + \gamma_e(u)) - g_e(\textstyle\sum_{v\in\set_1}\gamma_e(v)) \geq \\  
g_e(\textstyle\sum_{v\in\set_2}\gamma_e(v) + \gamma_e(u)) - g_e(\textstyle\sum_{v\in\set_2}\gamma_e(v)).
\end{align*}
Multiplying both sides of the above inequality by the positive value $h_e(\kappa(e))$, it immediately follows that 
\begin{equation*}
w_e(\set_1\cup\{u\}) - w_e(\set_1) \geq w_e(\set_2\cup\{u\}) - w_e(\set_2).
\end{equation*}
Hence, $w_e(\set)$ defined in~\eqref{e:w_e_prop} is submodular according to the definition given in Section~\ref{ss:math}.
Moreover, $g_e(0)=0$ and the symmetry of $g_e$ respectively guarantee that $w_e(\emptyset)=0$ and $w_e(\set)=w_e(\V\setminus\set)$ for any $\set\subseteq\V$.
\end{proof}

\begin{lemma}\label{concave_func_property}
If $a_2 \geq a_1$, $b \geq 0$ and $g$ is a concave function, the inequality $g(a_1+b)-g(a_1) \geq g(a_2+b)-g(a_2)$ holds.
\end{lemma}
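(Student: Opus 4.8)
The plan is to prove that for a concave $g$ the increment over an interval of fixed length $b$ is non-increasing as the interval slides to the right; the stated inequality is exactly this fact applied to the intervals $[a_1,a_1+b]$ and $[a_2,a_2+b]$. Rather than invoke differentiability (which would let me write the increment's derivative as $g'(\cdot+b)-g'(\cdot)\le 0$, but is not assumed), I would argue directly from the definition of concavity recalled in Section~\ref{ss:math}.

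First I would rewrite the target inequality $g(a_1+b)-g(a_1)\ge g(a_2+b)-g(a_2)$ in the symmetric, rearranged form $g(a_1+b)+g(a_2)\ge g(a_1)+g(a_2+b)$. The advantage of this form is that all four arguments lie in the interval $[a_1,a_2+b]$, with $a_1$ and $a_2+b$ as the two extreme points (since $a_1\le a_2\le a_2+b$ and $a_1\le a_1+b\le a_2+b$) and with $a_1+b$ and $a_2$ as the two interior points.

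Next I would express each interior point as a convex combination of the two extreme points. Setting $s=a_2-a_1\ge 0$ and, assuming $s+b>0$, $\lambda=b/(s+b)\in[0,1]$, a direct check gives $a_1+b=\lambda(a_2+b)+(1-\lambda)a_1$ and $a_2=(1-\lambda)(a_2+b)+\lambda a_1$, so that the two interior points are built from complementary weights. Applying concavity to each yields $g(a_1+b)\ge\lambda g(a_2+b)+(1-\lambda)g(a_1)$ and $g(a_2)\ge(1-\lambda)g(a_2+b)+\lambda g(a_1)$; adding these two inequalities makes the coefficients of $g(a_2+b)$ and of $g(a_1)$ each sum to $1$, producing precisely $g(a_1+b)+g(a_2)\ge g(a_2+b)+g(a_1)$, which is the claim.

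I do not expect a serious obstacle, as this is a classical slope-monotonicity property of concave functions; the only care required is the bookkeeping of choosing the weight $\lambda$ so that the two concavity inequalities add cleanly, together with disposing of the degenerate case $s+b=0$. The latter forces $a_1=a_2$ and $b=0$, in which case both sides reduce to $g(a_1)-g(a_1)=0$ and the inequality holds trivially.
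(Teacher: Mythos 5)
Your proof is correct and is essentially identical to the paper's: both express $a_1+b$ and $a_2$ as convex combinations of the endpoints $a_1$ and $a_2+b$ with complementary weights (your $\lambda = b/(a_2-a_1+b)$ matches the paper's choice of $t$), apply concavity twice, and add the two inequalities so the coefficients collapse. The only cosmetic difference is the handling of degenerate cases (you exclude only $a_2-a_1+b=0$, the paper excludes $a_1=a_2$ or $b=0$ up front), which does not change the argument.
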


\begin{proof}
For the cases where $a_1=a_2$ or $b=0$, the result is trivial.
For the other cases, recall the definition of concave functions. 
The following inequality holds for any $x$ and $y$ in the domain of $g$,
\begin{equation*}
g(tx + (1-t)y) \geq tg(x) + (1-t)g(y), \quad\forall t\in[0,1].	
\end{equation*}
Set $x=a_1$, $y=a_2+b$. For $t=\frac{a_2-a_1}{a_2-a_1+b}$ and $t=\frac{b}{a_2-a_1+b}$, the above inequality respectively becomes
\begin{align*}	
g(a_1+b) &\geq \textstyle \frac{a_2-a_1}{a_2-a_1+b} g(a_1) + \frac{b}{a_2-a_1+b} g(a_2+b), \\
g(a_2)   &\geq \textstyle\frac{b}{a_2-a_1+b} g(a_1) + \frac{a_2-a_1}{a_2-a_1+b} g(a_2+b).
\end{align*}
The proof is completed by respectively adding both sides of these two inequalities together. 
\end{proof}

\noindent
Proposition~\ref{prop1} provides a framework for constructing submodular hypergraphs from EDVWs. 
Based on (the Lov\'asz extension of) the submodular edge weights, we can further define the hypergraph $1$-Laplacian [cf.~\eqref{e:Q1}] and apply spectral clustering by minimizing~\eqref{e:R1}.
Although not required by Proposition~\ref{prop1}, since $\kappa(e)$ reflects the connection strength of hyperedge $e$, it is often reasonable in practice to select a non-decreasing function for $h_e$ such as $h_e(x)=1$ and $h_e(x)=x$.
Also notice that for trivial EDVWs --- i.e., $\gamma_e(v)=1$ for all vertices $v$ and incident hyperedges $e$ --- the functions defined in~\eqref{e:w_e_prop} reduce to cardinality-based functions~\cite{veldt2020hypergraph}.

As mentioned in Section~\ref{ss:submodular_hg}, we can threshold the second eigenvector of the $1$-Laplacian to get a $2$-partition that achieves the minimum Cheeger cut $h_2$.
More precisely, given a vector $\x\in\R^N$ and a threshold $t$, a partitioning can be defined as $\set=\{v\in\V \,|\, x_v > t \}$ and its complement. 
The optimal threshold $t$ can be determined as the one that minimizes the Cheeger cut in~\eqref{e:2_cheeger_constant}.
An approximation to the computation of the eigenvector [cf.~\eqref{e:R1}] can be obtained using the inverse power method (IPM).
The IPM involves an inner optimization problem that has faster solvers for graphs than general submodular hypergraphs~\cite{li2018submodular, hein2010inverse}.
From a scalability standpoint, this raises the question of whether there exist submodular functions in the proposed framework whose corresponding \emph{hypergraph} $1$-Laplacian is identical to some \emph{graph} $1$-Laplacian. 
In the next section, we give one example that meets this condition.

\subsection{A reducible case}\label{ss:prop2}

Given a hypergraph $\hg=(\V,\E,\mu,\kappa,\{\gamma_e\})$ with EDVWs, we define its corresponding clique expansion graph $\G=(\V,\E^{(g)},\mu,\mathbf{A})$ as an undirected graph endowed with a vertex weight function $\mu$, which is constructed as follows.
The vertex set $\V$ and vertex weights $\mu$ are identical to those of $\hg$.
Each hyperedge in $\hg$ is replaced by a clique so that $\E^{(g)}=\{(u,v) \,|\, u,v\in e, u\neq v, e\in\E\}$.
Lastly, the edge weights contained in the adjacency matrix are defined as $A_{uv}=\sum_{e\in\E} h_e(\kappa(e))\gamma_e(u)\gamma_e(v)$, where $h_e$ is defined as in Proposition~\ref{prop1}.
With this construction in place, the following result holds.

\begin{prop}\label{prop2}
For any submodular hypergraph constructed from EDVWs following~\eqref{e:w_e_prop}, if for every hyperedge $e\in\E$ we select the following concave function
\begin{equation}\label{e:clique}
g_e(x) = x\cdot(\textstyle\sum_{v\in e}\gamma_e(v) - x),	
\end{equation}
then the resulting hypergraph $1$-Laplacian is identical to the $1$-Laplacian of its corresponding clique expansion graph.
\end{prop}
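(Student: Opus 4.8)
The plan is to prove that the two operators coincide by showing that they induce the same scalar form $\langle \x, \cdot\,(\x)\rangle$ for every $\x\in\R^N$. Since the $1$-Laplacian is characterized (as the subdifferential of the underlying convex energy) through this form in~\eqref{e:Q1} and~\eqref{e:Q1g}, equality of the forms for all $\x$ forces equality of the operators. First I would reduce the global identity to a per-hyperedge identity. Both sides decompose additively over hyperedges: the left-hand side because $\sum_{e\in\E}f_e(\x)$ is already a sum of hyperedge contributions, and the right-hand side because the clique-expansion weights are defined as $A_{uv}=\sum_{e\in\E}h_e(\kappa(e))\gamma_e(u)\gamma_e(v)$. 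Substituting this into~\eqref{e:Q1g} and using $\gamma_e(v)=0$ for $v\notin e$, the target reduces to proving, for each hyperedge $e$,
\begin{equation*}
f_e(\x) = h_e(\kappa(e))\cdot\tfrac{1}{2}\sum_{u,v\in e}\gamma_e(u)\gamma_e(v)\,|x_u-x_v|.
\end{equation*}

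To compute $f_e$, I would exploit the fact that, for a fixed $\x$, the Lov\'asz extension~\eqref{e:lovasz} is \emph{linear} in the underlying set function, since the sorted order (and hence the sets $\set_j$) depends only on $\x$. Writing $\Gamma_e=\sum_{v\in e}\gamma_e(v)$ and $s(\set)=\sum_{v\in\set}\gamma_e(v)$, the chosen function~\eqref{e:clique} gives $w_e=h_e(\kappa(e))\,(\Gamma_e\, s - s^2)$, so $f_e=h_e(\kappa(e))\,(\Gamma_e f_s - f_{s^2})$, where $f_s$ and $f_{s^2}$ are the Lov\'asz extensions of $s$ and of $s^2$. Because $s$ is modular, $f_s(\x)=\sum_v\gamma_e(v)x_v$ is simply linear, so the remaining work is to evaluate $f_{s^2}$ directly from the sorting formula.

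The main obstacle is this last computation together with the cancellation that produces the absolute values. Denoting the sorted partial sums $S_j=\sum_{k\le j}\gamma_e(i_k)$, I would apply summation by parts to~\eqref{e:lovasz} for $s^2$ and use the telescoping identity $S_j^2-S_{j-1}^2=\gamma_e(i_j)(S_j+S_{j-1})$ to obtain $f_{s^2}(\x)=\sum_j\gamma_e(i_j)^2 x_{i_j}+2\sum_{k<j}\gamma_e(i_k)\gamma_e(i_j)x_{i_j}$. Expanding $\Gamma_e f_s$ in the same sorted indices and subtracting, the diagonal terms and the $k<j$ terms must cancel in exactly the right way, leaving $\sum_{a<b}\gamma_e(i_a)\gamma_e(i_b)(x_{i_a}-x_{i_b})$. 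The observation that closes the argument is that the sorted order guarantees $x_{i_a}\ge x_{i_b}$ whenever $a<b$, so $x_{i_a}-x_{i_b}=|x_{i_a}-x_{i_b}|$; symmetrizing over ordered pairs then yields precisely $\tfrac{1}{2}\sum_{u,v\in e}\gamma_e(u)\gamma_e(v)|x_u-x_v|$. The delicate part is the index bookkeeping (diagonal versus off-diagonal, and the relabeling of the two triangular sums) so that no stray terms survive.

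Finally, I would reassemble the pieces: multiplying by $h_e(\kappa(e))$ recovers the per-hyperedge identity, and summing over $e\in\E$ while interchanging the order of summation reproduces $\tfrac{1}{2}\sum_{u,v\in\V}A_{uv}|x_u-x_v|$ with $A_{uv}$ as defined for the clique expansion. Having matched the forms for arbitrary $\x$, the two $1$-Laplacians coincide as operators, which completes the proof.
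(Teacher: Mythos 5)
Your proof is correct, but its core computation takes a genuinely different route from the paper's. The two arguments share the same skeleton: equality of the induced forms $\langle\x,\cdot\,(\x)\rangle$ for all $\x$ is taken as equality of the operators (which is how~\eqref{e:Q1} and~\eqref{e:Q1g} define them), the identity splits additively over hyperedges, and everything reduces to the per-hyperedge identity $f_e(\x)=\frac{h_e(\kappa(e))}{2}\sum_{u,v\in e}\gamma_e(u)\gamma_e(v)|x_u-x_v|$, after which both proofs finish by exchanging summations and using $\gamma_e(v)=0$ for $v\notin e$. The difference is in how $f_e$ is evaluated. The paper substitutes~\eqref{e:clique} directly into the sorted-order formula~\eqref{e:lovasz} and exploits the factorization
\begin{equation*}
g_e\bigl(\textstyle\sum_{v\in\set_j^{(e)}}\gamma_e(v)\bigr)=\textstyle\sum_{u\in\set_j^{(e)},\,v\in e\setminus\set_j^{(e)}}\gamma_e(u)\gamma_e(v),
\end{equation*}
which says that this choice of $g_e$ turns $w_e$ into the cut function of a weighted clique on $e$; the consecutive differences $x_{i_j^{(e)}}-x_{i_{j+1}^{(e)}}$ then telescope, pair by pair, into the absolute values. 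You instead use linearity of the Lov\'asz extension in the set function for fixed $\x$ (valid because the chain of sets in~\eqref{e:lovasz} depends only on the sort of $\x$), decompose $w_e=h_e(\kappa(e))(\Gamma_e s-s^2)$ with $\Gamma_e=\sum_{v\in e}\gamma_e(v)$ and $s$ modular, and evaluate $f_{s^2}$ by Abel summation. I checked your cancellation: $\Gamma_e f_s-f_{s^2}$ does collapse to $\sum_{a<b}\gamma_e(i_a)\gamma_e(i_b)(x_{i_a}-x_{i_b})$, and the sorted order converts each difference into an absolute value, giving the same per-hyperedge identity; your scheme also absorbs the boundary term $F(\set)x_{i_N}$ of~\eqref{e:lovasz} automatically, whereas the paper must invoke $w_e(e)=0$ separately. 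As for what each approach buys: the paper's factorization is shorter and makes the emergence of the clique expansion conceptually transparent --- at the level of set functions, the chosen $w_e$ simply \emph{is} a clique cut function. Your route costs more index bookkeeping but is more systematic, since linearity plus Abel summation lets you evaluate the Lov\'asz extension of $h_e(\kappa(e))\,p(s)$ for an arbitrary polynomial $p$, a tool that could be useful for the paper's first stated open question, namely which other choices of $g_e$ yield hypergraph $1$-Laplacians that reduce to graph $1$-Laplacians.
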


\begin{proof}
For any $\x\in\R^N$, sort the subset of its entries indexed by the vertices incident to hyperedge $e$ in non-increasing order $x_{i_{1}^{(e)}} \geq x_{i_{2}^{(e)}} \geq \cdots \geq x_{i_{|e|}^{(e)}}$ and define a series of sets $\set_j^{(e)}=\{i_{1}^{(e)}, \cdots, i_{j}^{(e)}\}$ for $1\leq j <|e|$.
It follows from~\eqref{e:lovasz} that the Lov\'asz extension of the defined $w_e$ can be written as 
\begin{align*}
& f_e(\x) 
\overset{(a)}{=} \textstyle\sum_{j=1}^{|e|-1} w_e(\set_j^{(e)}) ( x_{i_{j}^{(e)}} - x_{i_{j+1}^{(e)}} ), \\
& \overset{(b)}{=} \textstyle\sum_{j=1}^{|e|-1} h_e(\kappa(e))  g_e(\sum_{v\in\set_j^{(e)}}\gamma_e(v)) ( x_{i_{j}^{(e)}} - x_{i_{j+1}^{(e)}} ), \\
& \overset{(c)}{=} h_e(\kappa(e)) \textstyle\sum_{j=1}^{|e|-1} \sum_{u\in\set_j^{(e)}, v\in e\setminus\set_j^{(e)}}\gamma_e(u)\gamma_e(v) ( x_{i_{j}^{(e)}} - x_{i_{j+1}^{(e)}} ), \\
&= \textstyle \frac{h_e(\kappa(e))}{2} \sum_{u,v\in e} \gamma_e(u)\gamma_e(v)|x_u - x_v|,
\end{align*}
where in $(a)$ we leveraged the fact that $w_e(e)=0$, and $(b)$, $(c)$ respectively followed~\eqref{e:w_e_prop} and~\eqref{e:clique}.
Then according to~\eqref{e:Q1}, the hypergraph $1$-Laplacian $\triangle_1(\x)$ should satisfy 
\begin{equation}\label{e:proof_clique_010}
\langle \x, \triangle_1(\x) \rangle = \textstyle\sum_{e\in\E}\frac{h_e(\kappa(e))}{2}\sum_{u,v\in e} \gamma_e(u)\gamma_e(v)|x_u-x_v|.
\end{equation}
It follows from~\eqref{e:Q1g} that the $1$-Laplacian $\triangle^{(g)}_1(\x)$ of the clique expansion graph $\G$ must satisfy
\begin{align}\label{e:proof_clique_020}
&\langle\x,\triangle^{(g)}_1(\x)\rangle = \textstyle\frac{1}{2}\sum_{u,v\in\V}\sum_{e\in\E} h_e(\kappa(e))\gamma_e(u)\gamma_e(v)|x_u-x_v|, \nonumber \\
& \quad\quad\quad\quad \overset{(d)}{=} \textstyle\sum_{e\in\E}\frac{h_e(\kappa(e))}{2}\sum_{u,v\in e} \gamma_e(u)\gamma_e(v)|x_u-x_v|,
\end{align}
where in $(d)$ we exchanged the order of the two summations and used the fact that $\gamma_e(v)=0$ for $v\notin e$.
Since $\x$ has been chosen arbitrarily, a direct comparison of~\eqref{e:proof_clique_010} and~\eqref{e:proof_clique_020} reveals that $\triangle_1(\x)$ and $\triangle^{(g)}_1(\x)$ coincide.
\end{proof}

\begin{figure*}[th]
\centering
\includegraphics[scale=0.51]{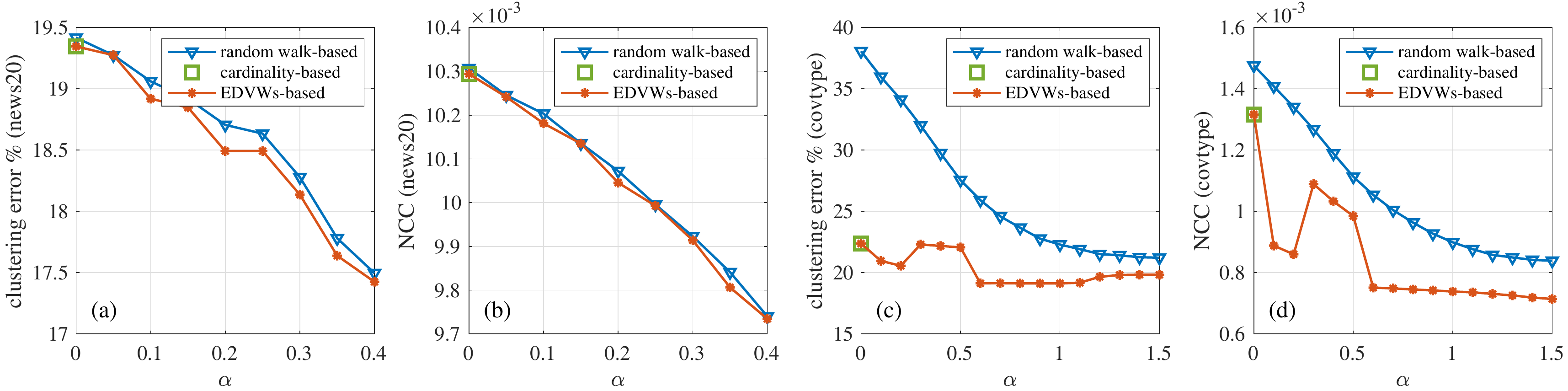}
\vspace{-7mm}
\caption{Clustering performance in two real-world datasets as a function of the parameter $\alpha$, which defines the EDVWs. (a-b) Clustering error and NCC in the $20$ Newsgroups dataset. (c-d) Clustering error and NCC in the Covertype dataset.}
\label{fig:results}
\end{figure*}

\noindent
Proposition~\ref{prop2} enables us to transform an eigenproblem in a hypergraph to an equivalent eigenproblem in a carefully constructed graph with the same number of vertices.
This allows us to leverage efficient IPM solvers~\cite{li2018submodular, hein2010inverse} when approximating the sought eigenvector of the $1$-Laplacian.
It should be noted that the choice of $g_e$ in~\eqref{e:clique} is not only computationally convenient but also meaningful in practice.
Indeed, the hyperedge cut cost achieves the maximum value when the hyperedge is partitioned into two parts that have the same sum of EDVWs.
Putting it differently, cutting a hyperedge by only shaving off an unimportant node (corresponding to a low EDVW) is lightly penalized compared to cutting the hyperedge in half, where this half is defined by the relative weights of the EDVWs.

\section{Numerical experiments}\label{s:exp}

We evaluate the performance of the proposed spectral clustering method for hypergraphs with EDVWs by focusing on the $2$-partition case.\footnote{The code needed to replicate the experiments here presented can be found at {\scriptsize{\url{https://github.com/yuzhu2019/hg_l1}}}.}
We compare the proposed method with the more standard spectral clustering approach based on the classical Laplacian and a $1$-Laplacian approach that ignores the EDVWs.

We summarize the proposed method (termed as EDVWs-based) as follows:
(i)~We build a submodular hypergraph from EDVWs according to Proposition~\ref{prop1} where we select $h_e(x)=x$ and $g_e$ as in~\eqref{e:clique}. 
For each vertex $v$, we set its weight $\mu(v)$ to its degree defined as $\deg(v)=\sum_{e\in\E:v\in e}\vartheta_e$ where $\vartheta_e=\max_{\set\subseteq e} w_e(\set)$.
For this choice of vertex weights, the objective function in~\eqref{e:2_cheeger_constant} is called normalized Cheeger cut (NCC)~\cite{buhler2009spectral}.
(ii)~We project the constructed submodular hypergraph onto its corresponding clique expansion graph as explained in Section~\ref{ss:prop2}.
(iii)~We compute the second eigenvector of the corresponding graph $1$-Laplacian using the IPM~\cite{hein2010inverse}.
(iv)~We threshold the obtained eigenvector using the optimal threshold value (see the last paragraph in Section~\ref{ss:prop1}), which achieves the minimum NCC.

\vspace{0.5mm}
\noindent {\bf Datasets.} 
We evaluate our performance on two widely used real-world datasets. \\
\emph{$20$ Newsgroups}\footnote{\scriptsize{\url{http://qwone.com/~jason/20Newsgroups/}}}: This dataset contains documents in different categories.
For our $2$-partition case, we consider the documents in categories `rec.motorcycles' and `rec.sport.hockey' and the $100$ most frequent words in the corpus after removing stop words and words appearing in $>10\%$ and $<0.2\%$ of the documents.
We then remove documents containing less than $5$ selected words, leaving us with $1406$ documents.
A document (vertex) belongs to a word (hyperedge) if the word appears in the document.
The EDVWs are taken as the corresponding tf-idf (term frequency-inverse document frequency) values~\cite{leskovec2020mining} to the power of $\alpha$, where $\alpha$ is an adjustable parameter.\\
\emph{Covertype}\footnote{\scriptsize{\url{https://archive.ics.uci.edu/ml/datasets/covertype}}}: This dataset contains areas of different forest cover types. 
We consider two cover types (types $4$ and $5$ in the dataset) and all numerical features.
Each numerical feature is first quantized into $20$ bins of equal size and then mapped to hyperedges.
The resulting hypergraph has $12240$ vertices and $196$ hyperedges. 
For each hyperedge (bin), we compute the distance between each feature value in this bin and their median, and then normalize these distances to the range $[0,1]$.
The EDVWs are computed as $\exp(-\alpha\cdot \text{distance})$.
In this way, larger EDVWs are given to nodes whose feature values are close to the typical feature value in that corresponding bin (hyperedge).

Following the setting in~\cite{hayashi2020hypergraph}, for both datasets we set the hyperedge weight $\kappa(e)$ to the standard deviation of the EDVWs $\gamma_e(v)$ for all $v\in\V$. 

\vspace{0.5mm}
\noindent{\bf Baselines.} 
We compare our proposed approach with two baseline methods:\\
\emph{Random walk-based}:
The paper~\cite{hayashi2020hypergraph} defines a hypergraph Laplacian based on random walks with EDVWs.
We compute the second eigenvector of the normalized hypergraph Laplacian proposed in~\cite{hayashi2020hypergraph} and then threshold it to get the partitioning.\\
\emph{Cardinality-based}:
In the description of our datasets, when $\alpha=0$ we are effectively ignoring the EDVWs and the corresponding submodular weight functions $w_e$ become cardinality-based.

\vspace{0.5mm}
\noindent{\bf Results.} 
The results are shown in Fig.~\ref{fig:results}, where the clustering error is computed as the fraction of incorrectly clustered samples.
Focusing first on the $20$ Newsgroups dataset (a-b), we can see that the proposed method achieves better clustering performance than the random-walk method based on the classical Laplacian. 
This gain can be explained by our method yielding a smaller NCC value.
More conspicuously, when EDVWs are ignored ($\alpha=0$), the clustering performance is severely deteriorated, underscoring the importance of the added modeling flexibility of hypergraphs with EDVWs.
Shifting attention to the Covertype dataset (c-d), the same trends are observed. 
However, in this case there are marked performance differences between our method and the one based on random walks, highlighting the value of considering the non-linear $1$-Laplacian in spectral clustering.

\section{Conclusions}\label{s:end}

We developed a framework for defining submodular weight functions based on EDVWs and provided an example that can be reduced to a clique expansion graph.
Numerical results indicated that the combination of the hypergraph $1$-Laplacian and EDVWs helps improve clustering performance.
Future research avenues include:
(i)~Studying if there exist other choices in the context of Proposition~\ref{prop1} such that the corresponding hypergraph $1$-Laplacian coincides with some graph $1$-Laplacian,
(ii)~Developing efficient computation methods for the hypergraph $1$-Laplacian's eigenvectors that are applicable to general submodular weight functions, and
(iii)~Designing multi-way partitioning algorithms based on non-linear Laplacians~\cite{buhler2009spectral}.

% To start a new column (but not a new page) and help balance the last-page
% column length use \vfill\pagebreak.

\vfill\pagebreak
\bibliographystyle{IEEEbib}
\bibliography{references}

\end{document}